\documentclass[runningheads]{llncs}
\usepackage[T1]{fontenc}
\usepackage{graphicx}
\usepackage{graphicx,hyperref}
\usepackage{cite}
\usepackage{amsmath,amssymb,amsfonts}
\usepackage{textcomp}
\usepackage{xcolor}

\usepackage{url}
\usepackage{enumitem}
\usepackage{booktabs}
\usepackage{algorithm}
\usepackage{algpseudocode}
\usepackage{multirow}
\usepackage{comment}
\begin{document}
\title{NANSDE-Net: A NEURAL SDE FRAMEWORK FOR GENERATING TIME SERIES WITH MEMORY}
\titlerunning{NANSDE-Net}
% If the paper title is too long for the running head, you can set
% an abbreviated paper title here
%
\author{Hiromu Ozai\inst{1} \and
Kei Nakagawa\inst{2}}
\authorrunning{H. Ozai and K. Nakagawa}
% First names are abbreviated in the running head.
% If there are more than two authors, 'et al.' is used.
%
\institute{Hiroshima University
\email{d251239@hiroshima-u.ac.jp}
\and
Graduate School of Business, Osaka Metropolitan University
\email{kei.nak.0315@gmail.com}}
\maketitle              % typeset the header of the contribution
\begin{abstract}
Modeling time series with long- or short-memory characteristics is a fundamental challenge in many scientific and engineering domains. While fractional Brownian motion has been widely used as a noise source to capture such memory effects, its incompatibility with It\^o calculus limits its applicability in neural stochastic differential equation~(SDE) frameworks. 
In this paper, we propose a novel class of noise, termed Neural Network-kernel ARMA-type noise~(NA-noise), which is an It\^o-process-based alternative capable of capturing both long- and short-memory behaviors. The kernel function defining the noise structure is parameterized via neural networks and decomposed into a product form to preserve the Markov property. Based on this noise process, we develop NANSDE-Net, a generative model that extends Neural SDEs by incorporating NA-noise. We prove the theoretical existence and uniqueness of the solution under mild conditions and derive an efficient backpropagation scheme for training. 
Empirical results on both synthetic and real-world datasets demonstrate that NANSDE-Net matches or outperforms existing models, including fractional SDE-Net, in reproducing long- and short-memory features of the data, while maintaining computational tractability within the It\^o calculus framework.

\keywords{Neural SDE  \and NANSDE-Net \and ARMA-type noise \and long-memory \and short-memory.}
\end{abstract}
\section{Introduction}
%研究の重要性
Time series generation using deep learning has become an active area of research, particularly in modeling complex real-world dynamics.
This is especially true in fields such as physics and finance, where it is important to model continuous-time dynamics in a natural way. One promising approach is the Neural Ordinary Differential Equation~(Neural ODE) model~\cite{chen2018neural}, which combines differential equations with deep learning. Neural ODE views ResNet~\cite{he2016deep} as the limit of infinitely deep networks and models time evolution as a continuous process using ODEs.
A new family of Neural ODEs was proposed to model various dynamic systems, allowing for flexible representation of complex temporal patterns~\cite{luo2021deep,dang2023constrained}.

%先行研究を踏まえた課題
However, real-world time series often include random fluctuations due to measurement noise or external shocks. To address this, the Neural Stochastic Differential Equation~(Neural SDE) model has been proposed~\cite{kong2020sde,kidger2021neural}. It extends Neural ODE by introducing randomness using Brownian motion~(Bm), allowing the model to generate stochastic trajectories.
Building on this framework, several variants of Neural SDEs have been developed to model various stochastic systems~\cite{wu2022itowave,sun2024neural}.
Still, a limitation of Neural SDE is that it assumes the Bm as the noise process. The Bm is a Gaussian process with independent and stationary increments, but many real-world systems show long-~(resp. short-)memory or non-Markovian behavior\footnote{%
Long-memory refers to persistent statistical dependence across long time lags, in contrast to short-memory where correlations decay quickly. Non-Markovian behavior implies that the future evolution of a system depends not only on its current state but also on its entire history.
}, which Bm cannot model well. 
For example, in financial markets or natural phenomena, past events often influence future behavior for a long~(resp. short) time~\cite{cont2001empirical,strogatz2018nonlinear,gatheral2022volatility}.

To handle this, researchers have used fractional Brownian motion~(fBm)~\cite{hayashi2022fsde-net,nakagawa2024lf}, which can represent long-~(resp. short-)memory when the Hurst index $H >(\rm{resp}. <) 0.5$. 
However, fBm is not an It\^o process, so standard stochastic calculus methods cannot be applied. As a result, it is challenging to perform gradient-based optimization or derive closed-form representations for learning and inference.

To overcome this issue, \cite{inoue2005noise1,inoue2005noise2} proposed a class of AR($\infty$)-type Gaussian processes, also known as AutoRegressive and Moving Average~(ARMA)-type noise. 
These processes have stationary increments and memory, and they are also It\^o processes. This means they can model long-~(resp. short-)memory and still be analyzed using stochastic calculus. 
However, the main difficulty is that their kernel function $\ell(s,u)$, which determines the structure of the process, is hard to express explicitly in general.

%本研究
In this paper, we propose a new type of ARMA-type noise called Neural Network~(NN)-kernel ARMA-type noise~(NA-noise), where the kernel function $\ell(s,u)$ is approximated by neural networks. Specifically, we express the NN kernel as a product $\ell(s,u) = \ell_1(s)\ell_2(u)$ and use NNs to learn $\ell_1$ and $\ell_2$. This allows us to keep the process Markovian and make training easier.

We then build a new stochastic differential equation model driven by this NA-noise, which we call \textit{NANSDE (Neural Arma-type Noise SDE)}, and implement a generative model called \textit{NANSDE-Net}. 
Our NANSDE is written in the It\^o form, so it guarantees existence and uniqueness of the solution and can be solved numerically using methods like Euler--Maruyama. 
%We also derive a backpropagation algorithm for NANSDE, allowing us to train the model efficiently using gradients.

From a theoretical perspective, we review known results for explicit kernel forms~(Proposition~\ref{prop:ArmaItoProcess}) and extend them to NN kernels. 
We show that NANSDE-Net can reproduce both long- and short-memory behavior, similar to or better than fSDE-Net, while still being compatible with It\^o calculus. 
In experiments, NANSDE-Net performs well on both $H>0.5$ and $H<0.5$ cases, unlike traditional models that struggle with one or the other.
%We contribute a new approach to time series generation that combines memory effects and stochastic calculus in a unified and trainable framework. 

Our construction guarantees It\^o form and a Markov augmentation that facilitate both simulation and training; it does not, in general, enforce stationary increments of the learned noise nor guarantee a prescribed asymptotic decay of autocorrelations associated with rigorous long-memory theory. Designing neural parameterizations that simultaneously ensure Gaussianity, stationary increments, and targeted long-memory asymptotics remains an open direction. Likewise, broad comparisons with state-of-the-art sequence generators and architecture search are outside our present scope. The present work should therefore be read as an It\^o-compatible, proof-of-concept alternative to fBm-based approaches, clarifying when and how memory can be introduced into Neural SDEs without abandoning standard stochastic calculus.

%\subsection{Organization of this paper} 
%Section~\ref{sec_Prelim} provides the theoretical background of this study, including formal definitions of ARMA-type noise and stochastic differential equations (SDEs) driven by NN-kernel ARMA-type noise.  
%Section~\ref{sec_ATNSDE} introduces the proposed NANSDE-Net model, along with its architectural design and theoretical foundations.  
%Section~\ref{sec_generative} describes the implementation details of NANSDE-Net.  
%Section~\ref{sec_exp} presents numerical experiments on both synthetic and real-world datasets to evaluate the model's performance.  
%Finally, Section~\ref{sec_con} concludes the paper with a summary of findings and suggestions for future research.

%%%%%%%%%%%%%%%%%
%  Preliminary  %
%%%%%%%%%%%%%%%%%
\section{Preliminaries}\label{sec_Prelim}
This section introduces the noise process employed in this study.
In the works by \cite{inoue2005noise1} and \cite{inoue2005noise2}, a class of Gaussian processes was proposed that possesses stationary increments and memory, while also being semimartingales. 
fBm is commonly used as a Gaussian process with stationary increments. 
However, since fBm is not a semimartingale, standard tools of stochastic calculus cannot be applied, making it analytically intractable in many cases. 
To address this limitation, \cite{inoue2005noise1} and \cite{inoue2005noise2} introduced a new class of noise processes specifically designed to overcome such difficulties. 
Building upon this framework, the present study adopts a modified version of the noise process proposed in those works.

\subsection{Gaussian Processes with Stationary Increments and the Semimartingale Property}
We assume that the process $Z(\cdot)$ is a continuous process with stationary increments such that $Z(0) = 0$, and satisfies one of the following continuous-time AR($\infty$)-type equations:
\begin{equation} \label{eq:AR-infty-type}
\frac{dZ}{dt}(t) + \int_{-\infty}^{t} a(t-s) \frac{dZ}{dt}(s)ds = \frac{dW}{dt}(t).
\end{equation}
(see equation (2.10) and (2.17) in \cite{inoue2005noise1} for their precise formulation), where $\{W(t)\}_{t \in \mathbb{R}}$ is a one-dimensional standard Bm defined on $(\Omega, \mathcal{F}, P)$ satisfying $W(0)=0$, and $dZ/dt$ and $dW/dt$ are the derivatives of $Z(\cdot)$ and $W(\cdot)$ respectively in the random distribution sense. 
The kernel $a(\cdot)$ is a nonnegative decreasing function with some adequate conditions to be specified in \cite{inoue2005noise1}. The case $a(\cdot)=0$ yields the usual white noise.

As is clear from the definition, $\{Z(t)\}_{t \in \mathbb{R}}$ does not have independent increments. In other words, it is a stochastic process with memory. 
Under appropriate conditions related to $a(\cdot)$~(see \cite{inoue2005noise1,inoue2005noise2}), $Z(\cdot)$ has the following MA($\infty$)-type representation. 
In the proposition below, a short range memory type kernel is used for $a(\cdot)$.

\begin{proposition}[\cite{inoue2005noise1,inoue2005noise2}]
For real parameter $q \in \mathbb R$ and positive parameter $p \in \mathbb R_{>0}$ that satisfy $p > q$, determine $a:(0,\infty) \to \mathbb R$ in~(\ref{eq:AR-infty-type}) as follows:
\begin{equation} \label{eq:Noise_kernel_a}
a(t) := q e^{-(p-q)t}.
\end{equation}
Then, $\{Z(t)\}_{t \in \mathbb R}$ in~(\ref{eq:AR-infty-type}) has the following MA($\infty$)-type representation:
\begin{equation}
\label{eq:Arma_type_Noise}
Z(t) = W(t) - \int_{0}^{t} \left\{ \int_{-\infty}^{s} c(s-u) dW(u) \right\} ds, \quad t \in \mathbb R
\end{equation}
Where $c:(0,\infty) \to \mathbb R$ is expressed in the following form:
\begin{equation} \label{eq:Noise_kernel_c}
c(t) = q e^{-pt}
\end{equation}
%And $\{W(t)\}_{t \in \mathbb R}$ is a one-dimensional Wiener process defined on $(\Omega, \mathcal{F}, P)$ satisfying $W(0)=0$.
\end{proposition}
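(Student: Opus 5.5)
The plan is to reduce the statement to a deterministic resolvent identity between the kernels $a$ and $c$, and then check that identity by an explicit computation. Formally, (\ref{eq:Arma_type_Noise}) says $\frac{dZ}{dt}(t) = \frac{dW}{dt}(t) - \int_{-\infty}^t c(t-u)\,dW(u)$, that is, $\frac{dZ}{dt} = (\delta - c)*\frac{dW}{dt}$. Similarly, (\ref{eq:AR-infty-type}) says $(\delta + a)*\frac{dZ}{dt} = \frac{dW}{dt}$, where all kernels are supported on $(0,\infty)$. So the MA($\infty$) representation holds precisely when $(\delta+a)*(\delta-c)=\delta$. Expanding the convolution, this is
\[
a - c - a*c = 0, \qquad\text{i.e.}\qquad c = a - a*c \ \text{on } (0,\infty).
\]
This is the resolvent relation used in \cite{inoue2005noise1}. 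I will verify it directly for the given kernels. Using Laplace transforms, $\hat a(z) = q/(z+p-q)$ and $\hat c(z) = q/(z+p)$, so $(1+\hat a(z))(1-\hat c(z)) = \frac{z+p}{z+p-q}\cdot\frac{z+p-q}{z+p} = 1$.

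In the time domain, the same identity is a one-line computation:
\[
(a*c)(t) = \int_0^t q e^{-(p-q)(t-s)} q e^{-ps}\,ds = q^2 e^{-(p-q)t}\int_0^t e^{-qs}\,ds = q e^{-(p-q)t}\bigl(1-e^{-qt}\bigr) = a(t)-c(t),
\]
valid for $q\neq 0$. The case $q=0$ is trivial, since then $a=c=0$ and $Z=W$. The hypotheses are needed here: $p>0$ gives $c\in L^1\cap L^2(0,\infty)$, and $p>q$ gives $a\in L^1(0,\infty)$, so every convolution is well defined.

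Next I make this rigorous on the process level. I define $Z$ by the right-hand side of (\ref{eq:Arma_type_Noise}). The inner integral $Y(s):=\int_{-\infty}^s c(s-u)\,dW(u)$ is a well-defined stationary Gaussian process, because $c\in L^2$. It has continuous paths, since $Y$ is an Ornstein--Uhlenbeck process solving $dY = -pY\,ds + q\,dW$. Hence $Z$ is continuous, Gaussian, satisfies $Z(0)=0$, and has stationary increments, because $W$ and $Y$ are jointly stationary in the increment sense. To check that $Z$ satisfies (\ref{eq:AR-infty-type}) in the random-distribution sense, I pair both sides with a test function $\varphi\in C_c^\infty(\mathbb R)$. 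I substitute $dZ(s) = dW(s) - Y(s)\,ds$, and I exchange the $ds$ and $dW(u)$ integrations using the stochastic Fubini theorem, which is justified by the $L^1$/$L^2$ integrability above. The resulting kernel acting on $dW$ is then $\delta + a - c - a*c$, which equals $\delta$ by the identity above.

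I expect the main obstacle to be this last step. One must handle the improper integrals over $(-\infty,t]$ carefully and justify the Fubini interchange between Lebesgue and Itô integrals on an infinite horizon, which relies on the exponential decay of $a$ and $c$. If uniqueness of the solution to (\ref{eq:AR-infty-type}) within the class of stationary-increment processes is also wanted, I will invoke the invertibility of $\delta + a$ as established in \cite{inoue2005noise1}. Here that invertibility is explicit: the inverse is $\delta - c$, with $c\in L^1$.
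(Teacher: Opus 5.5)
Your kernel computations are right. The identity $a-c-a*c=0$ on $(0,\infty)$, i.e.\ $(1+\hat a)(1-\hat c)=1$, is exactly what underlies the cited result; the paper itself gives no proof beyond the citation. Your stochastic-Fubini check that the process defined by \eqref{eq:Arma_type_Noise} is a continuous stationary-increment solution of \eqref{eq:AR-infty-type} with $Z(0)=0$ is also sound.

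The gap is the direction of the argument. The proposition starts from a given continuous stationary-increment $Z$ with $Z(0)=0$ solving \eqref{eq:AR-infty-type}, and asserts that this $Z$ is given by \eqref{eq:Arma_type_Noise}. You prove the converse, and the implication actually claimed is the step you label optional. Nor does ``invertibility of $\delta+a$'' close it by itself. Invertibility in the convolution algebra on $(0,\infty)$ does not exclude nonzero solutions of the homogeneous equation on all of $\mathbb R$. For instance, if $q<0$ then $\dot U(t)=e^{-pt}$ satisfies $\dot U(t)+\int_{-\infty}^t a(t-s)\dot U(s)\,ds=0$. Hence $Z+\xi U$, with $U(t)=(1-e^{-pt})/p$, solves \eqref{eq:AR-infty-type} for every real $\xi$, and here $(\delta-c)*\dot U$ diverges. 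Applying $\delta-c$ to $(\delta+a)*\dot Z$ and re-associating is legitimate only under growth control at $-\infty$. That is precisely where the stationary-increment hypothesis must be used.

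The repair is your own Fubini computation run the other way. For the given $Z$, put $V(t):=\int_{-\infty}^t a(t-s)\,dZ(s)$. Stationary increments make $t\mapsto (E|Z(t)|^2)^{1/2}$ subadditive, so $E|Z(t)|^2$ grows at most quadratically. With the exponential decay of $a$, the integral is therefore well defined, and $V$ is stationary with bounded second moment. Then \eqref{eq:AR-infty-type} together with $Z(0)=W(0)=0$ says exactly $Z(t)=W(t)-\int_0^t V(s)\,ds$, i.e.\ $dW=dZ+V\,dt$. Apply Fubini (stochastic for the $dW$ part, ordinary for the $V\,ds$ part) and use $c+c*a=a$ to get
\[
\int_{-\infty}^t c(t-u)\,dW(u)=\int_{-\infty}^t \bigl(c+c*a\bigr)(t-s)\,dZ(s)=\int_{-\infty}^t a(t-s)\,dZ(s)=V(t).
\]
Substituting this into $Z(t)=W(t)-\int_0^t V(s)\,ds$ gives \eqref{eq:Arma_type_Noise}. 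With this as the main argument, your existence computation becomes the complementary statement that such a $Z$ actually exists.
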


%Next, we transform $\{Z(t)\}_{t \in \mathbb{R}}$ according to the general theory of filtering~(see \cite{Liptser-Shiryaev2001}). First, we define the filtration $\{\mathcal{F}_{t}\}_{t \geq 0}$ as follows:
%\begin{equation}
%    \mathcal{F}_{t} := \sigma(Z(u):0 \leq u \leq t) \lor \mathcal{N}, \quad t \in \mathbb{R}
%\end{equation}
%where $\mathcal{N}$ is the set consisting of all null sets.
%The innovation process $\{\hat{W}(t)\}_{t \geq 0}$ of $\{Z(t)\}_{t \in \mathbb{R}}$ is defined as follows:
%\begin{equation}
%    \hat{W}(t) = Z(t) + \int_{0}^{t} E\left[ \int_{-\infty}^{s} c(s-u) dW(u) \middle| \mathcal{F}_s \right] ds, \quad t \geq 0,
%\end{equation}
%$\{\hat{W}\}_{t \geq 0}$ becomes a one-dimensional Wiener process, and $\{\mathcal{F}_{t}\}_{t \geq 0}$ coincides with the filtration generated from $\{\hat{W}\}_{t \geq 0}$. 
%In other words, $\{\mathcal{F}_{t}\}_{t \geq 0} = \{\hat{\mathcal{F}}_{t}\}_{t \geq 0} := \sigma(\hat{W}(u):0 \leq u \leq t) \lor \mathcal{N}$.
%Furthermore, by the deterministic function $\ell(s,u)$ and the innovation process $\{\hat{W}\}_{t \geq 0}$, $\{Z(t)\}_{t \geq 0}$ has the following representation of the It\^o process:
%\begin{equation} \label{eq:ItoProcessRepOfZ}
%    Z(t) = \hat{W}(t) - \int_{0}^{t} \left\{ \int_{0}^{s} \ell(s,u) d\hat{W}(u) \right\} ds, \quad t \geq 0.
%\end{equation}

Furthermore, by transforming it according to the filtering theory, we obtain the representation of the Ito process of $\{Z(t)\}_{t \geq 0}$ shown below.
\begin{equation} \label{eq:ItoProcessRepOfZ}
    Z(t) = \hat{W}(t) - \int_{0}^{t} \left\{ \int_{0}^{s} \ell(s,u) d\hat{W}(u) \right\} ds, \quad t \geq 0.
\end{equation}
where $\ell(s,u)$ is a deterministic function and $\{\hat{W}(t)\}_{t \geq 0}$ denotes the innovation process defined below, which is a one-dimensional Brownian motion.
\begin{equation}
    \hat{W}(t) = Z(t) + \int_{0}^{t} E\left[ \int_{-\infty}^{s} c(s-u) dW(u) \middle| \mathcal{F}_s \right] ds, \quad t \geq 0,
\end{equation}

To use this $\{Z(t)\}_{t \geq 0}$ as noise, we need to find the specific form of the Volterra kernel $\ell(s,u)$.
The explicit form of this kernel was made available for the first time thanks to the results of \cite{inoue2005noise1} and \cite{inoue2005noise2}.
The explicit form of $\ell(s,u)$ can be obtained from the following proposition.

\begin{proposition}[\cite{inoue2005noise1,inoue2005noise2}] \label{prop:ArmaItoProcess}
We take $c(\cdot)$ in the form of equation \eqref{eq:Noise_kernel_c}.
Then $\ell(s,u)$ in (\ref{eq:ItoProcessRepOfZ}) has the following closed-form representation:
\begin{equation} \label{eq:Arma-type-noise-kernel}
\begin{aligned}
    \ell(s,u) &= e^{-ps}\ell(u), \qquad s>u>0, \\
    \ell(u) &:= q e^{pu} \left\{ 1-\frac{2q(p-q)}{(2p-q)^2 e^{2(p-q)u} - q^2} \right\}, \quad u>0.
\end{aligned}
\end{equation}
\end{proposition}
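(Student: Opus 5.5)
The plan is to recognise the inner integral in \eqref{eq:Arma_type_Noise} as the unobserved state of a linear Gaussian filtering problem and to obtain $\ell(s,u)$ from the Kalman--Bucy filter. Set $X(s) := \int_{-\infty}^{s} c(s-u)\,dW(u)$ with $c(t)=qe^{-pt}$ as in \eqref{eq:Noise_kernel_c}.

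\textbf{Step 1: the state--observation system.} Differentiating in $s$ shows that $X$ is a stationary Ornstein--Uhlenbeck process:
\begin{equation*}
dX(s) = -pX(s)\,ds + q\,dW(s), \qquad X(0)\sim N\bigl(0,\,q^2/(2p)\bigr).
\end{equation*}
From \eqref{eq:Arma_type_Noise}, the observation is
\begin{equation*}
dZ(s) = -X(s)\,ds + dW(s), \qquad Z(0)=0.
\end{equation*}
This is a linear Gaussian system on $[0,\infty)$ in which the signal noise and the observation noise are the same Brownian motion, so they are fully correlated with cross-intensity $q$. Take $\mathcal F_s=\sigma(Z(u):u\le s)\vee\mathcal N$. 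Then $\mathcal F_0$ is trivial, so $m(s):=E[X(s)\mid\mathcal F_s]$ satisfies $m(0)=0$, and the error variance satisfies $\gamma(0)=q^2/(2p)$. With this notation the innovation in the statement is $d\hat W = dZ + m\,ds$. By standard filtering theory (Liptser--Shiryaev), $\hat W$ is a Brownian motion.

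\textbf{Step 2: the Kalman--Bucy filter with correlated noise.} The observation coefficient is $h=-1$ and the cross-covariance is $q$, so the gain is $q+h\gamma=q-\gamma$. This gives
\begin{equation*}
dm(s) = -p\,m(s)\,ds + \bigl(q-\gamma(s)\bigr)\,d\hat W(s),
\end{equation*}
\begin{equation*}
\gamma'(s) = -2p\gamma + q^2 - (q-\gamma)^2 = -2(p-q)\gamma - \gamma^2 .
\end{equation*}
Solving the linear SDE for $m$ with $m(0)=0$ yields
\begin{equation*}
m(s)=\int_0^s e^{-p(s-u)}\bigl(q-\gamma(u)\bigr)\,d\hat W(u).
\end{equation*}
Substituting into $Z(t)=\hat W(t)-\int_0^t m(s)\,ds$ gives exactly \eqref{eq:ItoProcessRepOfZ} with $\ell(s,u)=e^{-ps}\,e^{pu}\bigl(q-\gamma(u)\bigr)$. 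This already establishes the product form $\ell(s,u)=e^{-ps}\ell(u)$.

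\textbf{Step 3: solving the Riccati equation.} It remains to show
\begin{equation*}
\gamma(u)=\frac{2q^2(p-q)}{(2p-q)^2e^{2(p-q)u}-q^2}.
\end{equation*}
The Riccati equation is of Bernoulli type, so the substitution $y=1/\gamma$ linearises it to
\begin{equation*}
y' = 2(p-q)\,y + 1 .
\end{equation*}
Its solution is
\begin{equation*}
y(u)=\Bigl(y(0)+\tfrac{1}{2(p-q)}\Bigr)e^{2(p-q)u}-\tfrac{1}{2(p-q)}, \qquad y(0)=\tfrac{2p}{q^2}.
\end{equation*}
Since $y(0)+\tfrac1{2(p-q)}=\tfrac{(2p-q)^2}{2q^2(p-q)}$, inverting reproduces the displayed formula. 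As a sanity check, at $u=0$ it gives $q^2/(2p)$. Then $q-\gamma(u)=q\{1-\gamma(u)/q\}$ is precisely the bracket in \eqref{eq:Arma-type-noise-kernel}. When $q=0$ the kernel vanishes and the claim is trivial. When $q\neq 0$, the hypothesis $p>q$ together with $p>0$ gives $(2p-q)^2>q^2$, so the denominator stays positive for all $u>0$ and the solution exists globally.

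\textbf{Main obstacle.} The delicate step is justifying the Kalman--Bucy equations in this correlated-noise setting, together with identifying the correct gain $q-\gamma$ and Riccati term. I would handle this by invoking the general linear filtering theorem for systems whose signal and observation share a common Wiener process, including the statement that the innovation is a Brownian motion generating the same filtration as $Z$. Alternatively, one can reduce to the uncorrelated case by writing $dX=-pX\,ds+q\,dZ+qX\,ds$, which gives $dX=-(p-q)X\,ds+q\,dZ$, and filtering this as a signal driven by the observation itself.
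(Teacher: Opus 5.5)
Your argument is correct, and it reaches the result by a genuinely different route from the source. The correlated-noise gain $q-\gamma$, the Riccati equation $\gamma'=-2(p-q)\gamma-\gamma^2$ with $\gamma(0)=q^2/(2p)$, its Bernoulli solution, and the identification $\ell(u)=e^{pu}\bigl(q-\gamma(u)\bigr)$ all check out and reproduce the stated kernel exactly. Moreover, $\ell$ is uniquely determined by \eqref{eq:ItoProcessRepOfZ} via the It\^o isometry, since $\hat W$ is a Brownian motion.

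The paper itself gives no proof; it imports the formula from \cite{inoue2005noise1,inoue2005noise2}. There, as I understand those works, the innovation kernel comes from a prediction formula for stationary-increment processes observed on a finite segment $[0,t]$ of the past. That formula expresses $\ell$ in general through the AR and MA kernels $a$ and $c$, and is then evaluated in closed form for the exponential choice \eqref{eq:Noise_kernel_c}.

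You instead use the fact that for $c(t)=qe^{-pt}$ the MA part is a one-dimensional Ornstein--Uhlenbeck state. The problem therefore becomes a finite-dimensional linear Gaussian filtering problem, to which the Liptser--Shiryaev correlated-noise filter applies directly. Its hypotheses hold because $X(0)$ is $\sigma(W(u):u\le 0)$-measurable, hence Gaussian and independent of the increments of $W$ on $[0,\infty)$. This buys a short, self-contained proof that also explains why the kernel factorizes, $\ell(s,u)=e^{-p(s-u)}\bigl(q-\gamma(u)\bigr)$. It also interprets $e^{-pu}\ell(u)$ as the Kalman gain, which tends to $q$ as the error variance $\gamma(u)$ decays.

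Your alternative reduction $dX=-(p-q)X\,ds+q\,dZ$ is even more transparent. It gives $X(t)-m(t)=e^{-(p-q)t}\bigl(X(0)-E[X(0)\mid\mathcal F_t]\bigr)$, so $\gamma(t)=e^{-2(p-q)t}\,\mathrm{Var}(X(0)\mid\mathcal F_t)$ is essentially the posterior variance of a single Gaussian parameter, and the Riccati step can be bypassed.

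What the prediction-theoretic route buys is generality. It needs no finite-dimensional Markov realization of $c$, so it is not tied to exponential kernels and in principle covers non-Markovian ones such as the long-memory $a(t)=p/(t+1)^{p+1}$. For such kernels your Kalman--Bucy argument has no finite-dimensional analogue, and, as the paper notes, no closed form of $\ell$ is known.
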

By explicitly determining the shape of $\ell$ in this way, the It\^o process representation of $\{Z(t)\}_{t \geq 0}$ can be specifically determined, making it possible to treat $\{Z(t)\}_{t \geq 0}$ as driving noise within the scope of It\^o calculus. 
We refer to this type of noise as ARMA-type noise.
Furthermore, although $\{Z(t)\}_{t \geq 0}$ itself is not a Markov process, it can be embedded as a component of a Markov process because $\ell(s,u)$ can be decomposed into functions of $s$ and $u$.
That is, if $K(\cdot)$ is defined as follows:
\begin{equation}
    K(t) := \int_{0}^{t} \ell(s) d\hat{W}(s), \quad t \geq 0,
\end{equation}
$(Z(t),K(t))^{\top}$ is described as a solution to the following Markov-type SDE \cite{inoue-moriuch-nakamura}:
\begin{equation}
    \left\{ \,
    \begin{aligned}
    & dZ(t) = -e^{-pt}K(t)dt + d\hat{W}(t), \\
    & dK(t) = \ell(t)d\hat{W}(t).
    \end{aligned}
\right.
\end{equation}
This makes it possible to use numerical calculation methods, such as the Euler--Maruyama method.

\subsection{ARMA-Type noise with the kernel approximated by a neural network}
The kernel in equation \eqref{eq:Noise_kernel_a} has been shown to represent a short-memory structure~(see \cite{inoue2005noise1}). 
To reproduce long-memory data, it is necessary to employ a kernel that exhibits long range memory characteristics. \cite{inoue2005noise1} introduces such a kernel in the form $a(t)=\frac{p}{(t+1)^{p+1}}, t>0$. 
However, an explicit expression for the corresponding $\ell(s,u)$ in this case has not yet been obtained, which remains an open problem. 
Therefore, in this paper, we adopt a method that approximates the kernel using neural networks in order to capture long-memory behavior. 
To ensure the Markov property, we adopt the form $\ell(s,u)=\ell_1(s)\ell_2(u)$.
The noise process used in this study takes the following form.
\begin{equation} \label{eq:NA-noise}
    \left\{ \,
    \begin{aligned}
    & dZ(t) = -\ell_1(t)K(t)dt + dW(t), \\
    & dK(t) = \ell_2(t)dW(t).
    \end{aligned}
\right.
\end{equation}
Where $\ell_1$ and $\ell_2$ are deterministic functions represented by neural networks and $W$ is a one-dimensional Wiener process. 
While $Z$ is indeed a Gaussian process, its increments are not guaranteed to be stationary.
We refer to this type of noise as NN-kernel ARMA-Type noise~(abbreviated as NA-noise).

\subsection{SDE Driven by NN-kernel ARMA-Type Noise}
To implement NN-kernel ARMA-Type Noise SDE-Net~(abbreviated as NANSDE-Net), we study SDEs where the standard Bm is replaced by NA-noise. 
We consider SDEs driven by NA-noise \eqref{eq:NA-noise} of the form
\begin{equation}
\label{eq:NANSDE}
\left\{ \,
\begin{aligned}
 X_t &= X_0 + \int_0^t b (s, X_s ) ds + \int_0^t \sigma (s, X_s) dZ(s) \\
     &= X_0 + \int_0^t \{b (s, X_s ) - \ell_1(s) \sigma(s, X_s) K(s)\} ds \\
     &\qquad\qquad\qquad + \int_0^t \sigma (s, X_s) dW(s), \\
K_t &= \int_0^t \ell_2(s) dW(s),
\end{aligned}
\right.
\end{equation}
where $W(t)$ is standard Bm.
Note that when $\ell_2(u)$ is $0$, NANSDE-Net becomes SDE-Net. Therefore, NANSDE-Net can be regarded as an extension of SDE-Net. Furthermore, \eqref{eq:NANSDE} is an It\^o process, the results of conventional stochastic calculus can be applied to determine the existence and uniqueness of solutions. Furthermore, the Euler--Maruyama method can be used for numerical calculations.

\section{Generative Modeling of Time Series Using NANSDE-Net}\label{sec_generative}
In \cite{hayashi2022fsde-net}, to compare the ability of different noise processes to reproduce long-memory effects, the authors deliberately limit the expressive power of the drift and diffusion terms. 
Specifically, time $t$ is not used as an input to the functions $b$ and $\sigma$. 
To conduct experiments under the same setting as in \cite{hayashi2022fsde-net}, in the following, we assume that the time evolution of a stochastic process $X = \{X_t\}_{t \ge 0}$ is described by the following NANSDE 
\begin{equation}
\label{eq:no_time_ANTSDE_stock}
\left\{ \,
    \begin{aligned}
    & dX_t = \{b(X_t, \theta) - \ell_1(t,\theta) \sigma(X_t, \theta) K(t,\theta)\} dt\\
    &\qquad\qquad\qquad + \sigma (X_t, \theta) dW_t, \\
    & dK(t,\theta) = \ell_2(t,\theta)dW_t.
    \end{aligned}
\right.
\end{equation}
where the drift term $b$, the diffusion term $\sigma$, the kernel $\ell_1$ and $\ell_2$ are parameterized by NN.
Hereafter, we optimize $b$, $\sigma$, $\ell_1$ and $\ell_2$ with respect to the NN-parameters $\theta$. 

\subsection{Network Architecture}
To estimate drift, volatility and kernel functions of \eqref{eq:no_time_ANTSDE_stock} by neural networks, we use $L$-layer multi-layer perceptron~(MLP) as network architecture\footnote{We use merely MLP for simplicity. We note that we can expect the improvement of fitting by optimizing network architecture.}:
\[
h^{\ell}_i = \sum_{j=1}^{N_{\ell-1}} w_{ij}^{\ell-1} x^{\ell-1}_j + b^{\ell-1}_i , \quad 
x^{\ell}_i = \varphi(h^\ell_i) 
\]
for each $\ell = 1,\ldots, L$ and each $i = 1, \ldots, N_\ell$. In this case $x^0$ and $x^L$ be input and output data and NN function is such that $x^L = f_\theta (x^0)$ where $\theta = \{ w_{ij}^\ell, b^\ell_i \}_{i, j, \ell}$. 
In the sequel, let $\Theta$ be the space of NN parameters on which $\theta$ takes values. Then existence and uniqueness of NANSDE, and precision assurance of numerical solutions are established as follows. 

\begin{theorem}[Informal]
The NANSDE-Net generator \eqref{eq:no_time_ANTSDE_stock} whose network architecture is given by MLP with $\mathrm{tanh}$ activation function has a unique solution, which can be numerically solved by the explicit Euler scheme.  
\end{theorem}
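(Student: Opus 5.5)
The plan is to treat \eqref{eq:no_time_ANTSDE_stock} as a two-dimensional It\^o SDE for the augmented state $Y_t=(X_t,K_t)^\top$, driven by the single Brownian motion $W$:
\begin{equation*}
dY_t = B(t,Y_t)\,dt + \Sigma(t,Y_t)\,dW_t,
\end{equation*}
with
\begin{equation*}
B(t,x,k)=\bigl(b(x,\theta)-\ell_1(t,\theta)\sigma(x,\theta)k,\;0\bigr)^\top,\qquad \Sigma(t,x,k)=\bigl(\sigma(x,\theta),\;\ell_2(t,\theta)\bigr)^\top .
\end{equation*}
Existence and uniqueness of a strong solution on a finite horizon $[0,T]$, and strong convergence of the explicit Euler--Maruyama scheme, then follow from the classical theorem. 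That theorem requires coefficients that are measurable, satisfy a global Lipschitz condition in the state uniformly in $t\in[0,T]$, satisfy a linear growth condition, and have $\tfrac12$-H\"older regularity in $t$ for the order-$\tfrac12$ error bound. So the proof reduces to checking these properties for MLPs with $\tanh$ activation, for each fixed $\theta\in\Theta$.

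First I collect the relevant facts about $\tanh$-MLPs. Since $|\tanh|\le 1$ and $|\tanh'|\le 1$, any MLP $f_\theta$ with $\tanh$ hidden activations is globally Lipschitz, with constant at most $\prod_{\ell}\|w^{\ell}\|_{\mathrm{op}}$. It is also bounded: if the output layer is $\tanh$, the bound is $1$; if the output layer is affine, the bound is $\sum_i|w^{L-1}_{ij}|+|b^{L-1}|$, because its inputs lie in $[-1,1]$. Hence $b(\cdot,\theta)$, $\sigma(\cdot,\theta)$, $\ell_1(\cdot,\theta)$ and $\ell_2(\cdot,\theta)$ are all bounded and Lipschitz, in particular continuous and of linear growth. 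As a consequence, $t\mapsto \ell_1(t,\theta)$ and $t\mapsto\ell_2(t,\theta)$ are Lipschitz in time, which is stronger than the H\"older condition needed.

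The main obstacle is the bilinear term $\ell_1(t)\sigma(x)k$: a product of state variables is not globally Lipschitz in $(x,k)$. Two routes handle it, and I would use the first, keeping the second as a backup.

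\emph{Route 1: solve $K$ first.} The equation for $K$ does not involve $X$, so it has the explicit Gaussian solution
\begin{equation*}
K_t=\int_0^t\ell_2(s,\theta)\,dW_s,
\end{equation*}
which is continuous and satisfies $E\sup_{t\le T}|K_t|^2<\infty$ by the It\^o isometry and Doob's inequality. Substituting this into the $X$-equation gives an SDE with random drift
\begin{equation*}
\tilde b(\omega,t,x)=b(x)-\ell_1(t)\sigma(x)K_t(\omega).
\end{equation*}
This drift is progressively measurable and Lipschitz in $x$ with the random constant $L_b+\|\ell_1\|_\infty L_\sigma \sup_{s\le T}|K_s|$, while the diffusion coefficient $\sigma$ is deterministically Lipschitz. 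Pathwise uniqueness follows from a localization argument: introduce the stopping times $\tau_R=\inf\{t:|K_t|\ge R\}$, apply Gronwall on $[0,\tau_R]$, and let $R\to\infty$. Existence follows from the same localization, or equivalently from the classical theorem applied with the coefficient truncated at $|k|\le R$.

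\emph{Route 2: check the monotone and one-sided conditions directly.} Alternatively, verify the Khasminskii-type conditions on the pair $(X,K)$. For the growth condition, use
\begin{equation*}
\langle y,B\rangle \le |x|\bigl(\|b\|_\infty+\|\ell_1\|_\infty\|\sigma\|_\infty|k|\bigr)\le C(1+|y|^2).
\end{equation*}
Combined with local Lipschitzness, this yields a unique global strong solution (Mao's theorem).

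For the Euler scheme I would avoid the non-Lipschitz issue altogether. The Euler approximation of $K$ is exact in distribution, up to the error of evaluating $\ell_2$ on the grid, and has moments bounded uniformly in the step size. The $X$-error is then controlled on the events $\{\sup|K|\le R\}$, on which all coefficients are Lipschitz, giving strong order $\tfrac12$ there. The complementary events have probability $O(e^{-cR^2})$ by the Gaussian tail of $\sup|K|$. Combining the two estimates yields strong $L^p$ convergence of the explicit Euler scheme, in the sense of Higham--Mao--Stuart for locally Lipschitz coefficients with bounded moments. Using the boundedness of $\sigma$ and $\ell_1$ together with Gaussian moments of $K$, the moment bounds for both the true solution and the Euler scheme reduce to a routine Gronwall argument.
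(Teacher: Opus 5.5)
Your proposal is correct, and it takes a more careful route than the paper. The paper's proof is one step: since $\tanh$ is smooth with bounded derivatives, the MLP coefficients are Lipschitz and bounded. It then asserts that \eqref{eq:no_time_ANTSDE_stock} ``satisfies the Lipschitz continuity and linear growth conditions'' and invokes the classical existence--uniqueness and Euler--Maruyama results directly.

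That assertion holds for $b,\sigma,\ell_1,\ell_2$ individually, but not for the system. As you observe, the coupling term $\ell_1(t)\sigma(x)k$ is only locally Lipschitz in $(x,k)$. If instead $K$ is taken as given, the $X$-drift has the random, unbounded Lipschitz constant $L_b+\|\ell_1\|_\infty L_\sigma\sup_{s\le T}|K_s|$. Either way, the global-Lipschitz theorem does not literally apply.

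What does hold globally is linear growth, because $\sigma$ and $\ell_1$ are bounded. Your Route 2 (local Lipschitz plus a Khasminskii/linear-growth bound) is exactly the standard result that closes this gap. Route 1 (solve the $X$-independent Gaussian equation for $K$ first, then localize with $\tau_R$) is even more robust, since it would still work with an unbounded $\sigma$. Likewise, your Euler argument via Higham--Mao--Stuart supplies what the paper's appeal to globally Lipschitz Euler theory omits: local Lipschitz plus bounded moments, with the Gaussian tail of $\sup|K|$ controlling the bad event. The paper's version buys brevity at the informal level of its statement; yours gives a complete argument.

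One minor slip: with an affine output layer, the bound on the $i$-th output is $\sum_j|w^{L-1}_{ij}|+|b^{L-1}_i|$, summing over $j$ rather than $i$. In the paper's architecture $\tanh$ is applied at every layer including the output, so the bound is simply $1$.
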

\begin{proof}
Note that the activation function $\mathrm{tanh}$ is smooth and its derivatives of any order are bounded. 
Thus, noting that the composition of functions preserves regularity and boundedness, we conclude that the NANSDE \eqref{eq:no_time_ANTSDE_stock}, whose drift, volatility, and kernel are parameterized by multilayer perceptrons $f_\theta$, satisfies the Lipschitz continuity and linear growth conditions. Therefore, the solution exists uniquely, and the Euler--Maruyama method is applicable.
\end{proof}

\subsection{Algorithm}
The generation procedure of NANSDE-Net is summarized in Algorithm \ref{alg:fsdenet}.
This algorithm follows the same procedure as Algorithm in Section V of \cite{hayashi2022fsde-net}. For further details, please refer to \cite{hayashi2022fsde-net}.

\begin{algorithm}[t]
\caption{Optimization of NANSDE-Net generator}\label{alg:fsdenet}
\begin{algorithmic}
\Require $\{ X_0,\ldots, X_T \}$ - a realized path of a stochastic process until time horizon $T$, sample size $M$, learning rate $\eta$, number of optimization steps $k$
\Ensure $(\theta_b, \theta_\sigma, \theta_{\ell_1}, \theta_{\ell_2})$ - the optimized parameter for NANSDE-Net generator
\While{not converged}
\For{$k$ steps}
    \State Let $\{ \hat{X}^{(i)}_0, \ldots \hat{X}^{(i)}_T \}_{i=1}^M $ be the realization of generated paths of size $M$ and let $p_t(\theta, \cdot)$ be the probability density function of log-difference process, which is estimated from generated values $\{ \hat{r}^{(i)}_t \}_{i=1}^M$ where $\hat{r}^{(i)}_t = \log (\hat{X}^{(i)}_{t+1} / \hat{X}^{(i)}_t)$. 
    \State Compute the gradient of loss function
    \[
    \mathcal{L}(\theta) = - \frac{1}{T} \sum_{t=0}^T \log p_{\theta} (t, r_t).
    \]
    \State Descent parameters: $
    \theta \gets \theta - \eta \nabla_{\theta} \mathcal{L} (\theta)$. 
\EndFor
\EndWhile
\end{algorithmic}
\end{algorithm}
\section{Experiments}\label{sec_exp}
\if0
\subsection{Dataset}
We use the following synthetic and real time-series data as input, from which we will generate new paths with the help of methods using neural networks. 
For the purpose of comparing performance, the following data set is the same as that used in Paper \cite{hayashi2022fsde-net}.
For detailed information on the data, the reader is referred to \cite{hayashi2022fsde-net}.
\subsubsection{Synthetic data} 
As a simple example of a series appropriate for the current situation, we generate fractional Brownian motion with low Hurst indices artificially and use them as data.
\begin{itemize}
%\item fBm: Fractional Brownian motion data with Hurst indices of 0.2, 0.3, and 0.4.
\item fBm: Fractional Brownian motion data with Hurst indices of 0.2, 0.3.
\end{itemize}

A realization of each process is obtained by dividing the interval $[0,1]$ into 1000 pieces with the common width. 
\fi

We evaluate our models on synthetic and real time series, and generate sample paths from each trained model. 
For fair comparison, we follow the dataset choices and settings of \cite{hayashi2022fsde-net}. 
Readers can consult \cite{hayashi2022fsde-net} for additional dataset details.

\subsubsection{Synthetic data}
We simulate fractional Brownian motion (\textbf{fBm}) with Hurst indices $H\in\{0.2,\,0.3\}$ to represent short-memory (anti-persistent) dynamics. 
Each path is discretized over $[0,1]$ into 1000 equal steps ($\Delta t=10^{-3}$).

\subsubsection{Real data}
We evaluate on six real-world time series that cover both long- and short-memory behaviors:
\textbf{SPX} (S\&P 500), \textbf{TPX} (TOPIX), and \textbf{SX5E} (Euro Stoxx 50) daily closing prices;
\textbf{NileMin} (annual minimum Nile levels at the Roda gauge, 622–1281);
\textbf{ethernetTraffic} (LAN traffic at Bellcore); and
\textbf{NhemiTemp} (monthly Northern Hemisphere temperature anomalies, 1854–1989, CRU/UEA).
The equity index series are obtained from Bloomberg. 
The other datasets are from the CRAN package \texttt{longmemo} (\url{https://cran.r-project.org/web/packages/longmemo/index.html}), which also provides detailed descriptions.

\if0
\subsubsection{Real data}
Also we apply our generation method for time-series data extracted from real world (SPX, TPX, SX5E, NileMin, ethernetTraffic, NhemiTemp) which exhibits long- or short-memory. 
SPX, TPX, SX5E are Daily closing prices of the S\&P 500 index, TOPIX index, and Euro Stoox 50 index.
NileMin: The yearly minimum water levels of the Nile River from 622 to 1281, as measured at the Roda gauge near Cairo, are used.
ethernetTraffic: Ethernet traffic data collected from a LAN at Bellcore, Morristown.

NhemiTemp: Monthly temperature data for the Northern Hemisphere from 1854 to 1989, obtained from the database maintained by the Climate Research Unit at the University of East Anglia, Norwich, England.
Each value represents the temperature difference (in degrees Celsius) from the corresponding monthly average over the reference period 1950–1979.

For the SPX,TPX, and SX5E series, we obtained from Bloomberg terminal.
The other real-world time series are obtained from the CRAN~(Comprehensive R Archive Network) website.
They can be downloaded from
\url{https://cran.r-project.org/web/packages/longmemo/index.html},
where detailed descriptions of each dataset are also available.
\fi

\begin{table*}[t]
  \caption{Numerical results for each performance metric and generative model for fBm and other types of data from the real world. Bold indicates the best performance.}
  \label{tab:result}
  \centering
  \scriptsize
\resizebox{\linewidth}{!}{
\begin{tabular}{ccccccc}
Data & Method & Hurst Index & Marginal Distribution & ACF & Weighted ACF & $R^2$ Score\\ 
\hline \hline
\multirow{4}{*}{fBm(H=0.2)} 
& Original & 0.2 (True value) & - & - & - & - \\
& RNN & 0.464 $\pm$ 0.111 & \bf{0.615 $\pm$ 0.024} & \bf{1.215} & \bf{0.633} & -1.086 $\pm$ 0.180 \\  
& SDE & 0.463 $\pm$ 0.132 & 1.058 $\pm$ 0.024 & 1.220 & 0.639 & -0.121 $\pm$ 0.044 \\ 
& fSDE & 0.581 $\pm$ 0.144 & 1.518 $\pm$ 0.013 & 1.220 & 0.638 & \bf{-0.017 $\pm$ 0.018} \\ 
& NANSDE & \bf{0.453 $\pm$ 0.133} & 1.094 $\pm$ 0.070 & 1.276 & 0.727 & -0.109 $\pm$ 0.049 \\
\hline
\multirow{4}{*}{fBm(H=0.3)} 
& Original & 0.3 (True value) & - & - & - & - \\
& RNN & 0.463 $\pm$ 0.121 & \bf{0.475 $\pm$ 0.024} & \bf{1.217} & \bf{0.666} & -1.139 $\pm$ 0.168 \\  
& SDE & 0.489 $\pm$ 0.157 & 0.567 $\pm$ 0.199 & 1.995 & 1.279 & -0.513 $\pm$ 0.237 \\ 
& fSDE & 0.581 $\pm$ 0.130 & 1.186 $\pm$ 0.069 & 1.429 & 0.845 & \bf{-0.069 $\pm$ 0.037} \\ 
& NANSDE & \bf{0.422 $\pm$ 0.177} & 0.561 $\pm$ 0.191 & 1.995 & 1.415 & -0.716 $\pm$ 0.320 \\
\hline
%\multirow{4}{*}{fBm(H=0.4)} 
%& Original & 0.4 (True value) & - & - & - & - \\
%& RNN & 0.453 $\pm$ 0.119 & \bf{0.256 $\pm$ 0.022} & \bf{1.210} & \bf{0.650} & -0.876 $\pm$ 0.160 \\  
%& SDE & 0.290 $\pm$ 0.218 & 0.404 $\pm$ 0.061 & 2.331 & 1.735 & -2.836 $\pm$ 0.998 \\ 
%& fSDE & 0.574 $\pm$ 0.172 & 0.650 $\pm$ 0.055 & 1.273 & 0.750 & \bf{-0.269 $\pm$ 0.084} \\ 
%& NANSDE & \bf{0.453 $\pm$ 0.113} & 0.262 $\pm$ 0.024 & 1.215 & 0.661 & -0.923 $\pm$ 0.202 \\
%\hline
\multirow{4}{*}{SPX} 
& Original & 0.614 & - & - & - & - \\
& RNN & 0.471 $\pm$ 0.097 & 0.704 $\pm$ 0.011 & 2.717 & \bf{1.340} & -6.376 $\pm$ 0.605 \\
& SDE & 0.510 $\pm$ 0.117 & 0.204 $\pm$ 0.045 & 2.762 & 1.430 & -2.362 $\pm$ 0.946 \\
& fSDE & \bf{0.591 $\pm$ 0.112} & 0.248 $\pm$ 0.043 & \bf{2.716} & 1.343 & \bf{-1.170 $\pm$ 0.293} \\
& NANSDE & 0.472 $\pm$ 0.115 & \bf{0.202 $\pm$ 0.058} & 2.804 & 1.404 & -1.481 $\pm$ 0.575 \\
\hline
\multirow{4}{*}{TPX} 
& Original & 0.396 & - & - & - & - \\
& RNN &  0.455 $\pm$ 0.098 & 0.327 $\pm$ 0.012 & \bf{1.889} & \bf{0.829} &  -1.524 $\pm$ 0.183 \\
& SDE & 0.451 $\pm$ 0.098 & 0.223 $\pm$ 0.024 & 1.900 & 0.842 & -3.471 $\pm$ 0.832 \\
& fSDE & 0.590 $\pm$ 0.137 & 0.261 $\pm$ 0.018 & 1.894 & 0.833 & \bf{-1.053 $\pm$ 0.249}
\\
& NANSDE & \bf{0.396 $\pm$ 0.139} & \bf{0.152 $\pm$ 0.038} & 2.018 & 0.972 & -2.371 $\pm$ 0.653 \\
\hline
\multirow{4}{*}{SX5E} 
& Original & 0.328 & - & - & - & - \\
& RNN &  0.468 $\pm$ 0.102 &  0.744 $\pm$ 0.011 & \bf{2.479} & \bf{1.269} & -2.940 $\pm$ 0.277 \\
& SDE & 0.464 $\pm$ 0.121 &  \bf{0.277 $\pm$ 0.042} & 2.609 & 1.343 & -5.410 $\pm$ 1.291 \\
& fSDE & 0.575 $\pm$ 0.130 & 0.277 $\pm$ 0.128 & 2.893 & 1.514 & -1.616 $\pm$ 0.435 \\
& NANSDE & \bf{0.449 $\pm$ 0.130} & 0.351 $\pm$ 0.168 & 3.018 & 1.717 & \bf{-1.606 $\pm$ 0.582} \\
\hline
\multirow{4}{*}{NileMin} 
& Original & 0.973 & - & - & - & - \\
& RNN & 0.477 $\pm$ 0.121  & 0.554 $\pm$ 0.040 & \bf{1.482} & \bf{0.948}  & -4.900 $\pm$ 0.702 \\
& SDE & 0.455 $\pm$ 0.145  & 1.211 $\pm$ 0.102 & 1.527 & 0.997  & -29.771 $\pm$ 8.093 \\
& fSDE & 0.646 $\pm$ 0.151 & \bf{0.233 $\pm$ 0.051} & 1.511 & 0.967  & \bf{-1.543 $\pm$ 0.410} \\
& NANSDE & \bf{0.952 $\pm$ 0.182} & 0.435 $\pm$ 0.192 & 2.938 & 2.463 & -2.837 $\pm$ 3.714 \\
\hline
\multirow{4}{*}{ethernetTraffic} 
& Original & 0.750 & - & - & - & - \\
& RNN & 0.442 $\pm$ 0.100 & 1.400 $\pm$ 0.009 & \bf{1.646} & \bf{1.124} & -9.297 $\pm$ 0.398 \\
& SDE & 0.668 $\pm$ 0.138  & 0.980 $\pm$ 0.043 & 3.902 & 2.562 & -0.753 $\pm$ 0.320 \\
& fSDE & \bf{0.761 $\pm$ 0.134} & 0.931 $\pm$ 0.218 & 3.485 & 2.577 & -0.880 $\pm$ 0.508 \\
& NANSDE & 0.677 $\pm$ 0.135 & \bf{0.801 $\pm$ 0.037} & 3.224 & 1.827 & \bf{-0.739 $\pm$ 0.148} \\
\hline
%\multirow{4}{*}{NBSdiff1kg} 
%& Original & 0.666 & - & - & - & - \\
%& RNN & 0.483 $\pm$ 0.177  & 0.492 $\pm$ 0.046 & \bf{1.105} & \bf{0.476} & -0.945 $\pm$ 0.299 \\
%& SDE & 0.661 $\pm$ 0.198  & \bf{0.490 $\pm$ 0.042} & 1.158 & 0.503 & -0.668 $\pm$ 0.381 \\
%& fSDE & \bf{0.671 $\pm$ 0.181}  & 0.526 $\pm$ 0.067 & 1.175 & 0.551 & \bf{-0.536 $\pm$ 0.318} \\
%& NANSDE & 0.557 $\pm$ 0.199 & 0.507 $\pm$ 0.055 & 1.200 & 0.551 & -0.663 $\pm$ 0.510 \\
%\hline
\multirow{4}{*}{NhemiTemp} 
& Original & 1.066 & - & - & - & - \\
& RNN & 0.463 $\pm$ 0.120 & 0.652 $\pm$ 0.023 & \bf{2.069} & \bf{1.343} & -6.973 $\pm$ 0.614 \\
& SDE & 0.603 $\pm$ 0.075 & \bf{0.172 $\pm$ 0.023} & 2.092 & 1.348 & -2.290 $\pm$ 0.296 \\
& fSDE & 0.599 $\pm$ 0.161 & 0.965 $\pm$ 0.187 & 2.575 & 1.890 & -17.534 $\pm$ 6.263 \\
& NANSDE & \bf{1.025 $\pm$ 0.235} & 0.455 $\pm$ 0.239 & 4.804 & 3.258 & \bf{-1.677 $\pm$ 2.245} \\
\hline
\end{tabular}
}
\end{table*}

\subsection{Experimental Setup}
In the following, we quantitatively evaluate the performance of the NANSDE-Net using a specified criterion.
For comparison, existing time series generators—including recurrent neural networks (RNN), SDE-Net, and fSDE-Net with $H>1/2$ are used in place of NANSDE-Net.
For the network architecture, we employ a two-layer MLP with 20 hidden units for SDE-Net, fSDE-Net, and NANSDE-Net, while a vanilla RNN with 40 hidden units is used for the RNN generator.

\begin{figure}[t]
\begin{minipage}[b]{0.32\linewidth}
    \centering
    \includegraphics[keepaspectratio, scale=0.2]{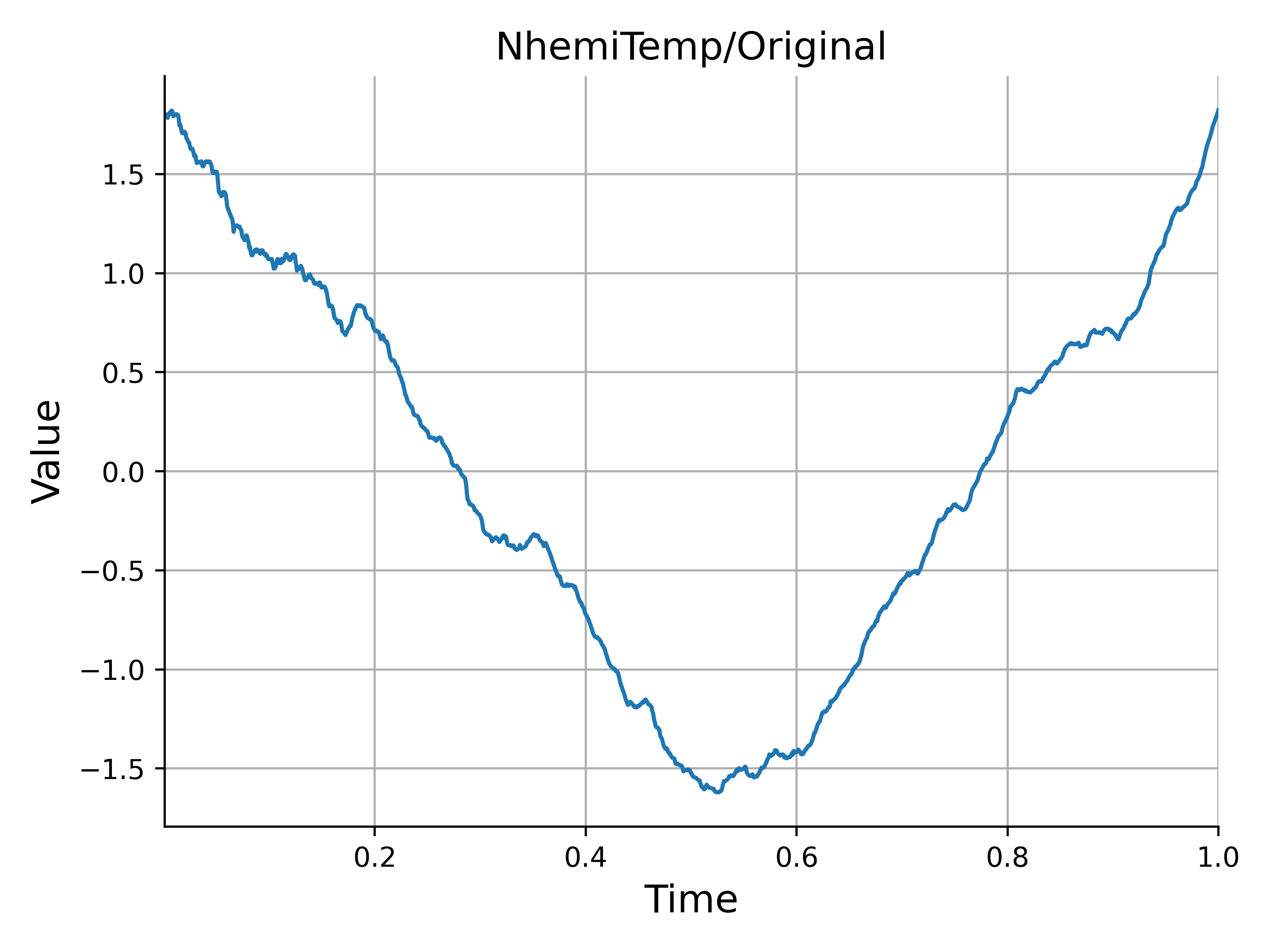}
    %\subcaption{}
\end{minipage}
\begin{minipage}[b]{0.32\linewidth}
    \centering
    \includegraphics[keepaspectratio, scale=0.2]{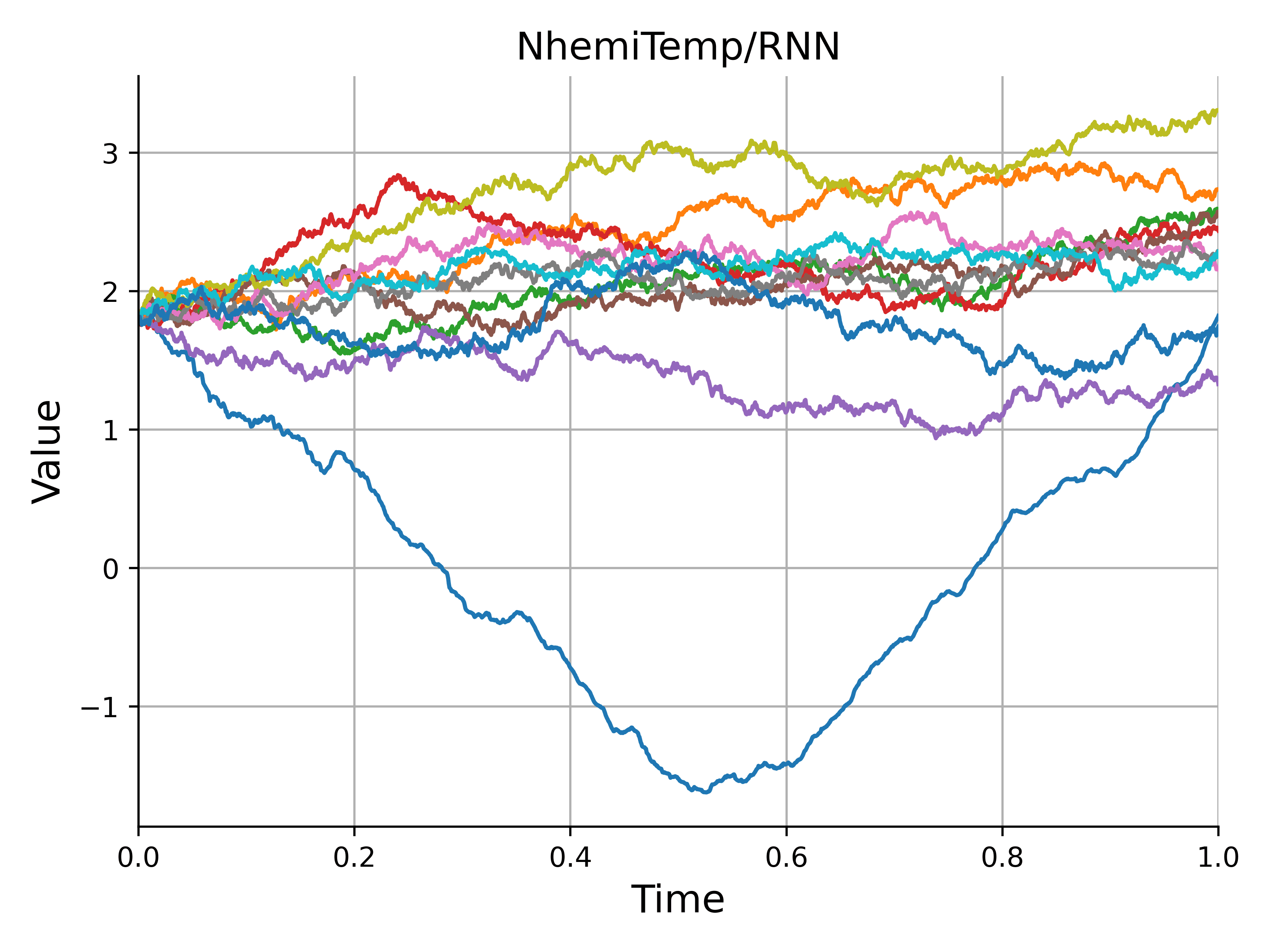}
    %\subcaption{}
\end{minipage}
\begin{minipage}[b]{0.32\linewidth}
    \centering
    \includegraphics[keepaspectratio, scale=0.2]{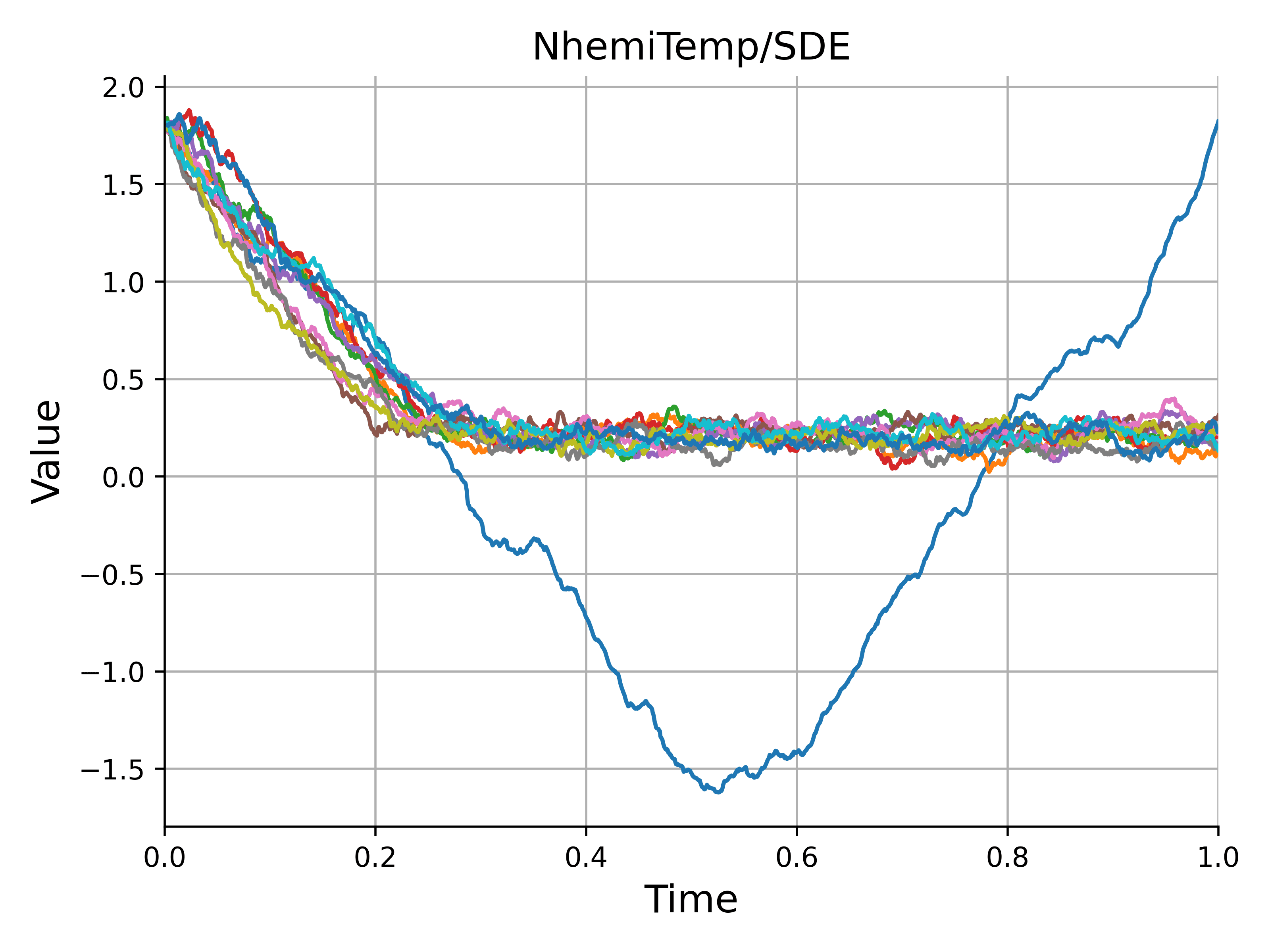}
\end{minipage}\\
\begin{minipage}[b]{0.32\linewidth}
    \centering
    \includegraphics[keepaspectratio, scale=0.2]{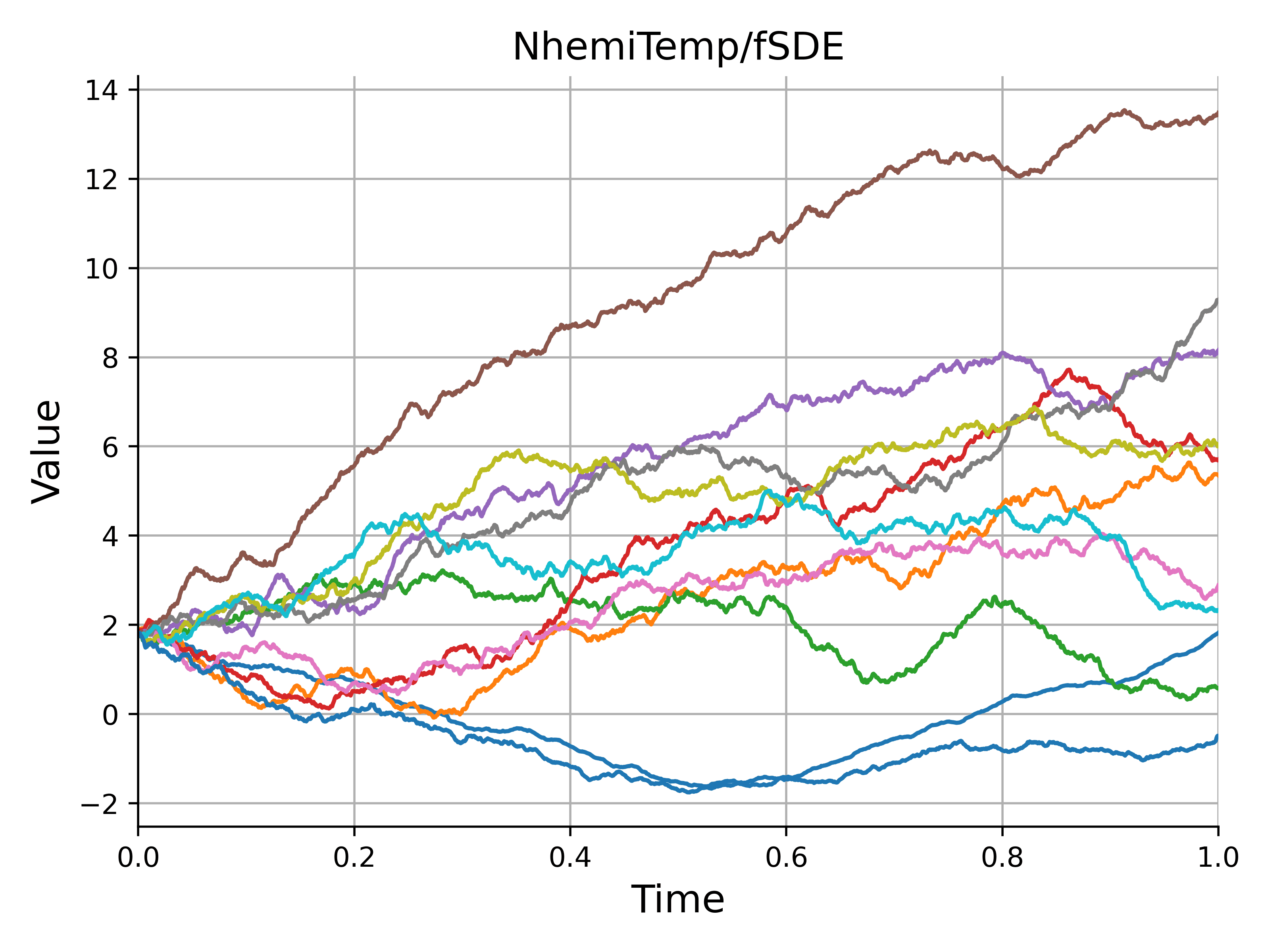}
\end{minipage}
\begin{minipage}[b]{0.32\linewidth}
    \centering
    \includegraphics[keepaspectratio, scale=0.2]{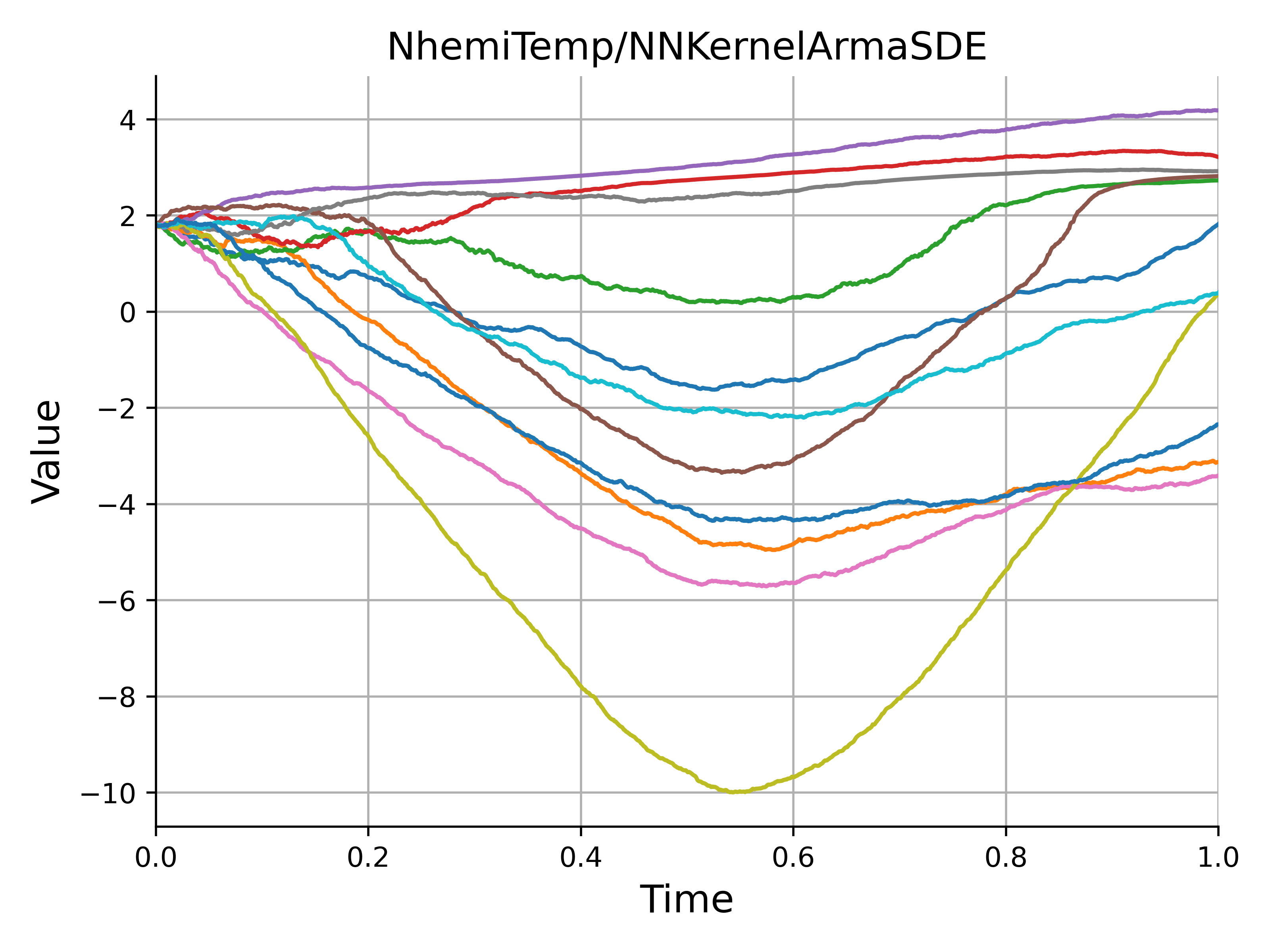}
\end{minipage}
\caption{Figure showing the paths of the original data and generated data for NhemiTemp~(Data showing long-memory). Synthetic time series are generated after calibration by RNN (upper center), SDE-Net (upper right), fSDE-Net with $H>1/2$ (lower left) and NANSDE-Net(lower center). The NANSDE-Net model successfully reproduces the characteristics of long-memory in NhemiTemp data.}
\label{fig:NhemiTemp_path}
\end{figure}

\begin{figure}[t]
\begin{minipage}[b]{0.24\linewidth}
    \centering
    \includegraphics[keepaspectratio, scale=0.2]{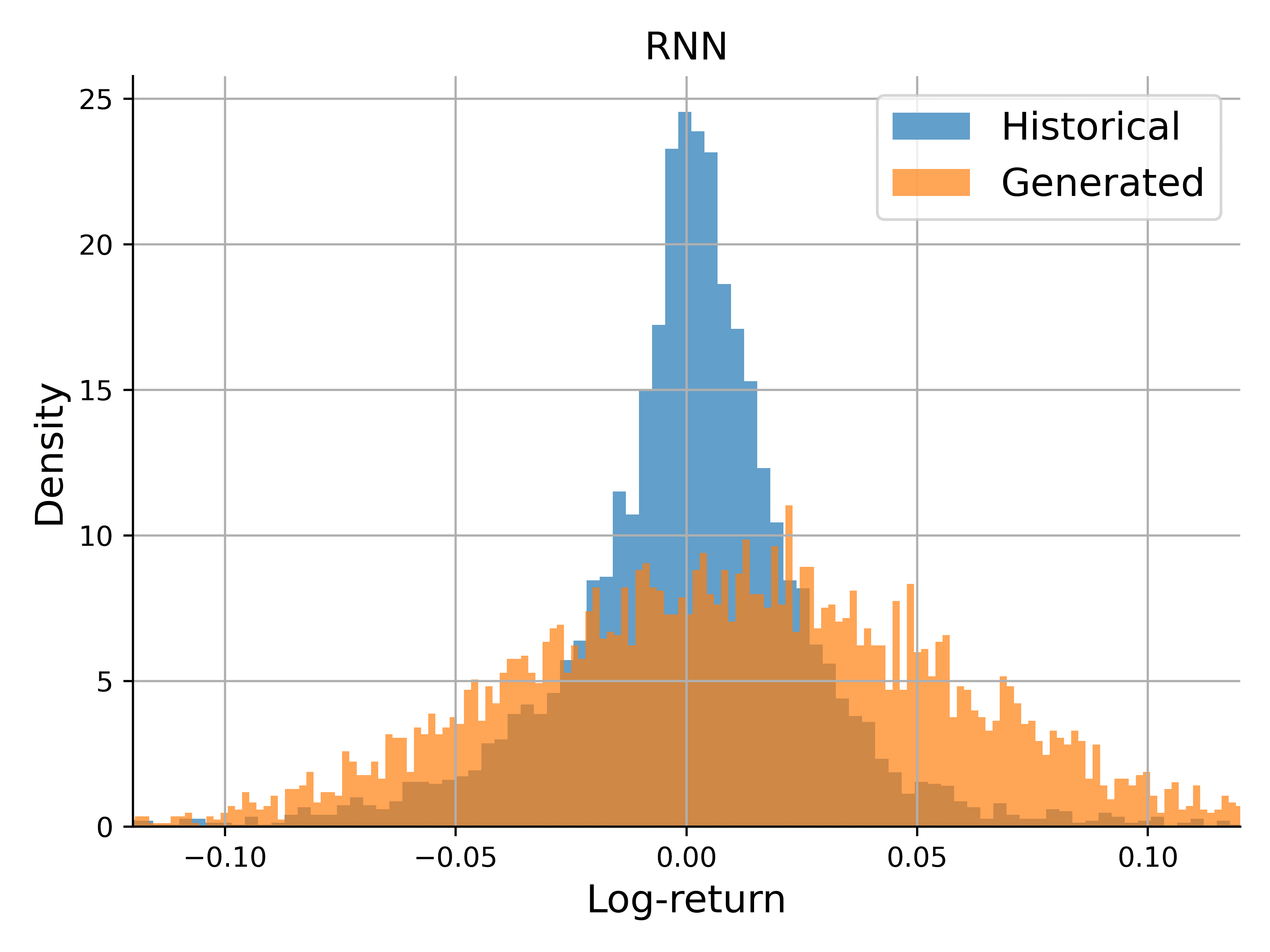}
    %\subcaption{}
\end{minipage}
%\begin{minipage}[b]{0.32\linewidth}
%    \centering
%    \includegraphics[keepaspectratio, scale=0.27]{figures/SPX_correlogram_RNN.png}
    %\subcaption{}
%\end{minipage}
\begin{minipage}[b]{0.24\linewidth}
    \centering
    \includegraphics[keepaspectratio, scale=0.2]{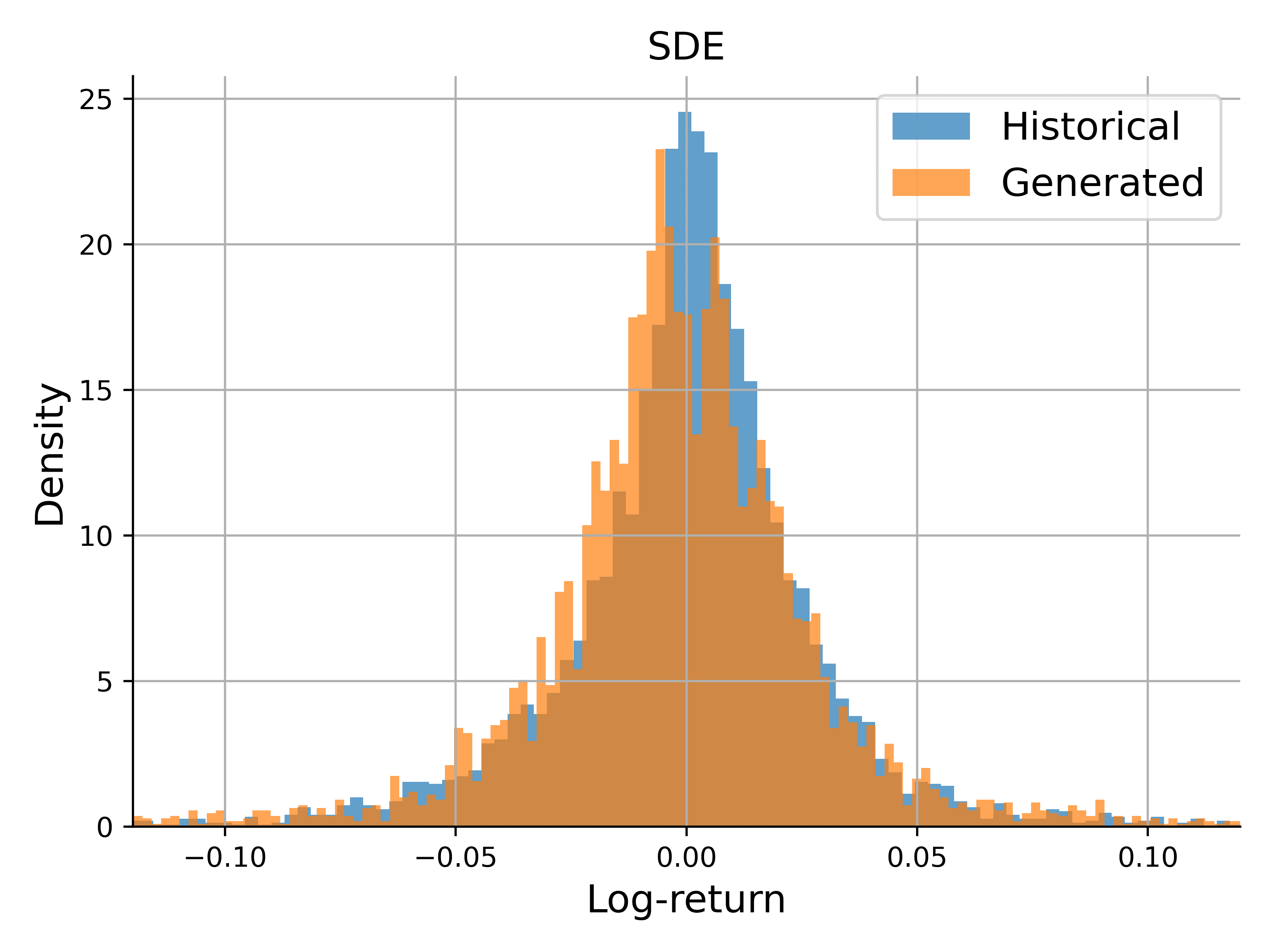}
    %\subcaption{}
\end{minipage} 
%\begin{minipage}[b]{0.32\linewidth}
%    \centering
%    \includegraphics[keepaspectratio, scale=0.27]{figures/SPX_correlogram_SDE.png}
    %\subcaption{}
%\end{minipage}
\begin{minipage}[b]{0.24\linewidth}
    \centering
    \includegraphics[keepaspectratio, scale=0.2]{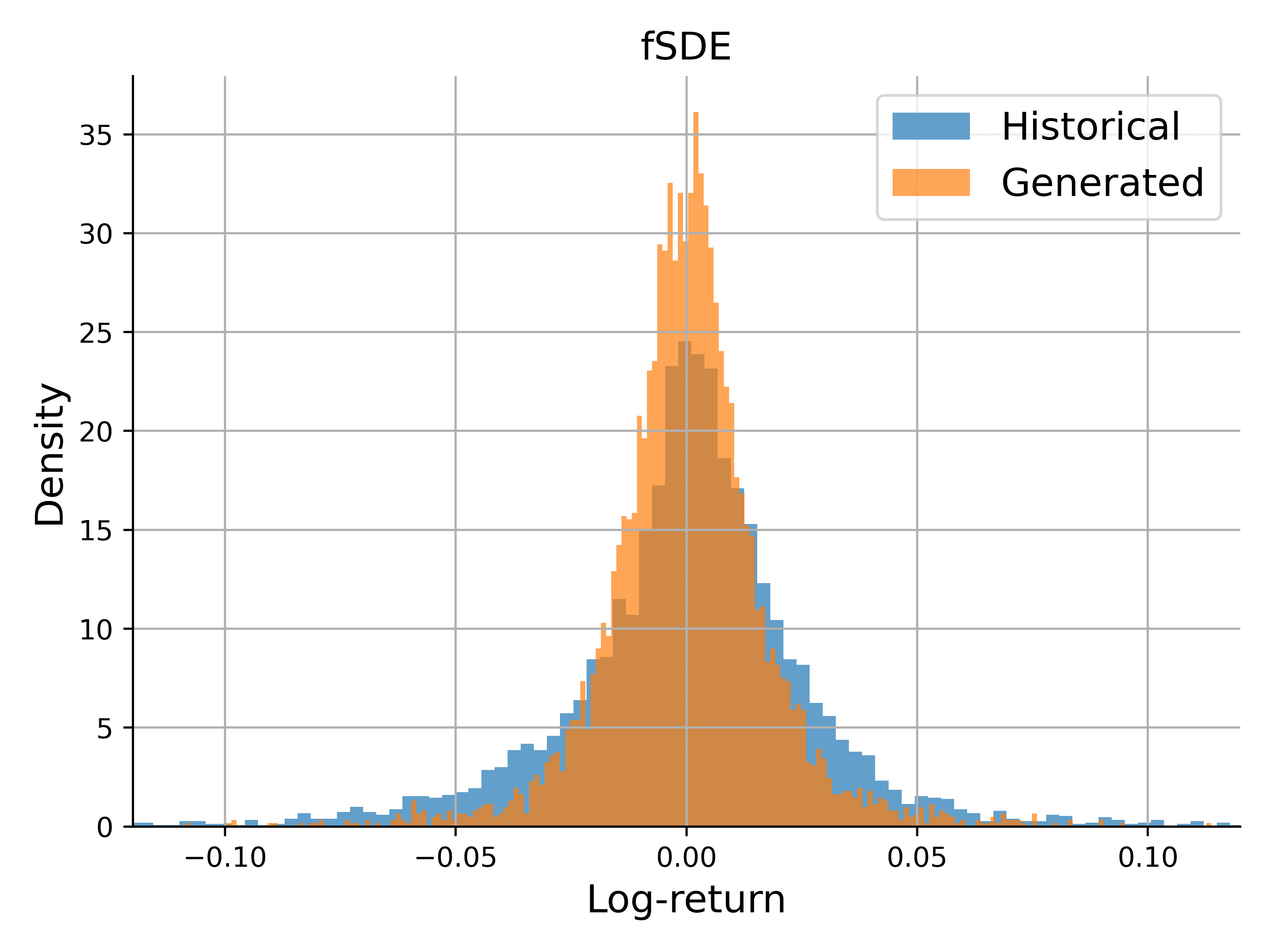}
\end{minipage}
%\begin{minipage}[b]{0.32\linewidth}
%    \centering
%    \includegraphics[keepaspectratio, scale=0.27]{figures/SPX_correlogram_fSDE.png}
    %\subcaption{}
%\end{minipage}
\begin{minipage}[b]{0.24\linewidth}
    \centering
    \includegraphics[keepaspectratio, scale=0.2]{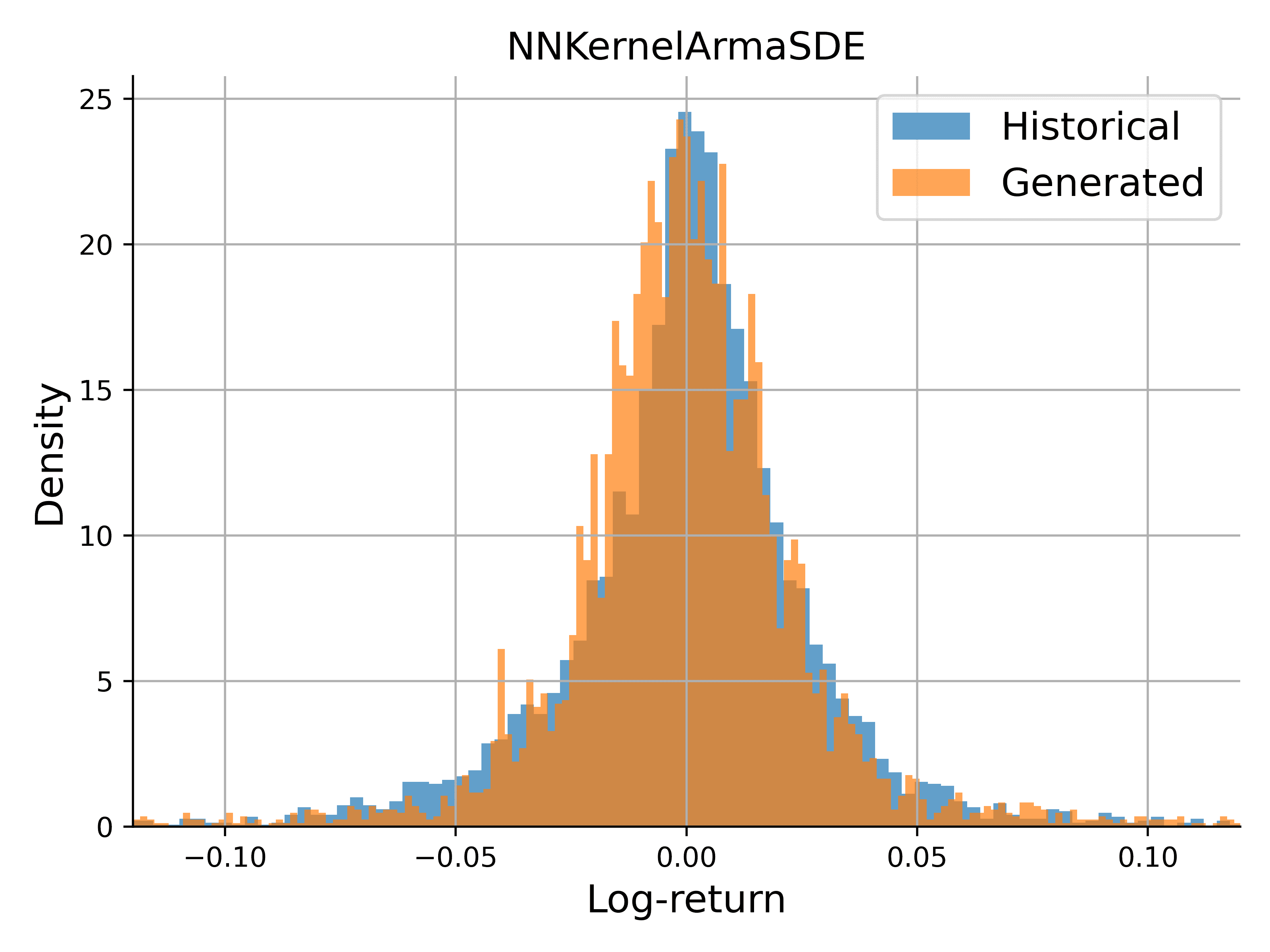}
\end{minipage}
%\begin{minipage}[b]{0.32\linewidth}
%    \centering
%    \includegraphics[keepaspectratio, scale=0.27]{figures/SPX_correlogram_NNKernelArmaSDE.png}
    %\subcaption{}
%\end{minipage}
%\caption{Comparison of the histogram (left column) and correlogram (right column) of the log return process for both synthetic and actual S\&P 500 index data. The synthetic time series are generated after calibration by RNN, SDE-Net, fSDE-Net, and NANSDE-Net, from top to bottom, respectively.}
\caption{Comparison of the histogram of the log return process for both synthetic and actual S\&P 500 index data. The synthetic time series are generated after calibration by RNN, SDE-Net, fSDE-Net, and NANSDE-Net, from left to right, respectively.}
\label{fig:SPX_histgram_correlogram}
\end{figure}

%\begin{figure}[t]
%\begin{minipage}[b]{0.32\linewidth}
%    \centering
%    \includegraphics[keepaspectratio, scale=0.27]{figures/fBM_H0.2_path_original.png}
    %\subcaption{}
%\end{minipage}
%\begin{minipage}[b]{0.32\linewidth}
%    \centering
%    \includegraphics[keepaspectratio, scale=0.27]{figures/fBM_H0.2_path_RNN.png}
%    %\subcaption{}
%\end{minipage}
%\begin{minipage}[b]{0.32\linewidth}
%    \centering
%    \includegraphics[keepaspectratio, scale=0.27]{figures/fBM_H0.2_path_SDE.png}
%\end{minipage}
%\begin{minipage}[b]{0.32\linewidth}
%    \centering
%    \includegraphics[keepaspectratio, scale=0.27]{figures/fBM_H0.2_path_fSDE.png}
%\end{minipage}
%\begin{minipage}[b]{0.32\linewidth}
%    \centering
%    \includegraphics[keepaspectratio, scale=0.27]{figures/fBM_H0.2_path_NNKernelArmaSDE.png}
%\end{minipage}
%\caption{This figure shows the paths of the original and generated data for fBm with H = 0.2 (data showing a small herst index). Synthetic time series are generated after calibration by RNN (upper right), SDE-Net (middle left), fSDE-Net with $H>1/2$ (middle right) and NANSDE-Net(lower left). It cannot be said that any of the methods accurately reproduce the small Hurst index path.}
%\label{fig:fBmH=0.2_path}
%\end{figure}

\subsection{Performance Metrics}
We prioritize the \textbf{Hurst index} as the primary measure of memory fidelity, reflecting our goal of capturing long- and short-memory behavior.

As secondary checks, we report:

\textbf{(i) Marginal distribution (total variation).}
Let $\{B_k\}_{k=1}^K$ be fixed bins. Define empirical frequencies
\[
p_k=\frac{1}{T}\sum_{t=1}^T \mathbf{1}\{r_t\in B_k\},\qquad
\hat p_k=\frac{1}{MT}\sum_{i=1}^M\sum_{t=1}^T \mathbf{1}\{\hat r^{(i)}_t\in B_k\}.
\]
The discrepancy is
\[
\mathrm{TV}=\frac{1}{2}\sum_{k=1}^K |p_k-\hat p_k|\in[0,1]\quad(0=\text{identical},\ 1=\text{maximally different}).
\]

\textbf{(ii) Time dependence (ACF scores).}
Write $r_{0:T}=(r_1,\ldots,r_T)$ and let $\gamma(\tau)=\mathrm{Corr}(|r_t|,|r_{t+\tau}|)$ for lags $\tau=1,\ldots,S$.
Set $C(r_{0:T})=(\gamma(1),\ldots,\gamma(S))\in[-1,1]^S$ and define the score
\[
\mathrm{ACF}=\Big\|\,C(r_{0:T})-\frac{1}{M}\sum_{i=1}^M C(\hat r^{(i)}_{0:T})\,\Big\|_2,
\]
where $\|\cdot\|_2$ is the Euclidean norm. A weighted variant emphasizes long lags:
\[
\mathrm{ACF}_w=\Big\|\,C(r_{0:T})\circ w-\frac{1}{M}\sum_{i=1}^M C(\hat r^{(i)}_{0:T})\circ w\,\Big\|_2,
\]
with Hadamard product $\circ$ and weights $w_\tau=2\tau/(S+1)$ (unit-mean).

\textbf{(iii) Predictive accuracy ($R^2$).}
Using an 80\%/20\% train/test split, let $\tilde r_t$ be one-step-ahead predictions on the test set $\mathcal{T}_{\text{test}}$ and
$\bar r_{\text{test}}$ its mean. Report
\[
R^2=1-\frac{\sum_{t\in\mathcal{T}_{\text{test}}}(r_t-\tilde r_t)^2}{\sum_{t\in\mathcal{T}_{\text{test}}}(r_t-\bar r_{\text{test}})^2}.
\]

For fair comparison, we follow the metric definitions and settings in \cite{hayashi2022fsde-net}.

\subsection{Results}
Table \ref{tab:result} shows results for the above performance metrics after 1000 iteration steps with 200 early stops where Adam with learning rate 0.004 is used as the optimizer. 
Examining the results for data with high Hurst indices, such as NileMin, ethernetTraffic, NBSdiff and NhemiTemp, it can be seen from the table that NANSDE-Net achieves results that are equivalent to or better than those of fSDE-Net. This suggests that NANSDE-Net can reproduce noise exhibiting long-memory with $H > 1/2$. 
As shown in Figure \ref{fig:NhemiTemp_path}, for NhemiTemp paths with long-memory, NANSDE-Net performs as well as or better than fSDE-Net. Moreover, unlike SDE-Net, it more accurately captures the characteristic features of the paths. This indicates that NA-noise, unlike Brownian motion, possesses the ability to capture long-memory characteristics.
Moreover, for datasets such as SPX, NileMin, and Ethernet, NANSDE-Net generates superior marginal distributions compared to SDE-Net.
However, from an overall perspective, focusing on performance metrics related to marginal distributions and autocorrelation functions (ACF), it cannot be conclusively stated that NANSDE-Net clearly outperforms the other methods \ref{fig:SPX_histgram_correlogram}.
Furthermore, looking at the results for data with small Hurst indices such as fBm, TPX and SX5E, RNN shows good results overall, but NANSDE-Net shows better results than other neural SDE methods such as SDE-Net and fSDE-Net. 
This demonstrates the superiority of NA-noise over other types of noise in data with $H < 1/2$. This shows that NANSDE-Net can reproduce not only $H > 1/2$ data but also $H < 1/2$ data. 
%However, it can be seen that NANSDE-Net does not successfully reproduce the roughness of data with a Hurst index of 0.2 (See the path in Figure \ref{fig:fBmH=0.2_path}). 
%Moreover, for data with $H<1/2$, SDE-Net also achieves sufficiently good results, and thus NANSDE-Net cannot be said to perform significantly better.
However, for data with $H<1/2$, SDE-Net also achieves sufficiently good results, and thus NANSDE-Net cannot be said to perform significantly better.

%%%%%%%%%%%%%%%%%%
%   Conclusion   %
%%%%%%%%%%%%%%%%%%
\section{Conclusion}\label{sec_con}
Our contributions are as follows:
\begin{itemize}
\item We introduce NANSDE-Net, an extension of SDE-Net that incorporates NN-kernel ARMA-type noise.
The proposed approach is based on the theoretical results presented in \cite{inoue2005noise1} and \cite{inoue2005noise2}.
\item We prove that the solution of NANSDE exists uniquely under standard Lipschitz and growth conditions, and derive a backpropagation formula for efficient gradient-based training.
\item The experimental results demonstrate the ability of NANSDE-Net to accurately reproduce the estimate of the Hurst index and the distribution characteristics of the original time series, particularly in the context of long-memory data.
Our code is available at \url{https://github.com/ozhr/ArmaNoise-SDE-Net}.
\end{itemize}

\subsection*{Further Study}

Since our primary focus is on investigating the role of the noise process in capturing memory characteristics, we deliberately adopt a simple neural network architecture, without attempting to optimize the network design or improve training efficiency. 
Accordingly, the experimental performance could potentially be enhanced by employing more advanced network architectures or by designing more effective loss functions.

As discussed in Section~\ref{sec_Prelim}, kernels of the form $a(t)=\frac{p}{(t+1)^{p+1}}$ satisfy the conditions for long-memory. 
However, the explicit expression of the corresponding function $\ell(s,u)$ remains unknown. 
Identifying this function constitutes an important direction for future research. 
In the present study, we instead approximate $\ell(s,u)$ by decomposing it as $\ell(s,u) = \ell_1(s)\ell_2(u)$ and learning each component via neural networks. 
Nevertheless, this decomposition does not ensure that the resulting process $Z(t)$ has stationary increments. 
Therefore, developing neural architectures that can enforce the stationary increment property remains a critical challenge. 
If such stationarity is guaranteed, it becomes possible to construct a noise process that is Gaussian, possesses stationary increments, and exhibits memory effects—thus offering a viable alternative to fBm.

Moreover, we observed that NANSDE-Net did not perform well on datasets characterized by very low Hurst indices. 
Addressing this issue requires a more comprehensive investigation into the general relationship between ARMA-type noise kernels $a(\cdot)$ and the Hurst index. 
The ultimate goal of this line of research is to identify kernel structures that enable the generation of sample paths, even for time series with low regularity.

\subsection*{Broader Impact}
Fractional Brownian motion (fBm) has traditionally been employed as the primary noise source with memory for modeling time series data exhibiting long- and short-memory behavior across various domains.
However, the findings of this study reveal that alternative noise processes can also capture such memory characteristics. This raises a new question as to whether fBm is indeed the most suitable choice for modeling long- and short-memory phenomena.
In particular, by appropriately designing the kernel function $\ell(s, u)$ to satisfy stationarity conditions, we demonstrate the existence of stationary-increment Gaussian processes with memory that are distinct from fBm.

\begin{credits}
\subsubsection{\ackname}
This work was supported by JST BOOST, Japan Grant Number JPMJBS2424.

\subsubsection{\discintname}
The authors have no competing interests to declare that are relevant to the content of this article.
\end{credits}

\end{document}